\newcommand{\cmark}{\ding{51}}%
\newcommand{\xmark}{\ding{55}}%
\newtheorem{lemma}{Lemma}
\newcolumntype{L}{>{\centering\arraybackslash}m{1.3cm}}
\begin{document}
\pagestyle{plain}
\title{Unsupervised Multi-source Domain Adaptation Without Access to Source Data}

\author{Sk Miraj Ahmed$^{1,}$\thanks{Equal Contribution},  Dripta S. Raychaudhuri$^{1,\ast}$, Sujoy Paul$^{2,\ast,}$\thanks{Work done while SP was a PhD student at UC Riverside.}, Samet Oymak$^{1}$, Amit K. Roy-Chowdhury$^{1}$\\
$^{1}$ University of California, Riverside,  $^{2}$ Google Research\\
{\tt \small \{sahme047@, drayc001@, spaul003@, oymak@ece.,  amitrc@ece.\}ucr.edu}
}

\maketitle

\begin{abstract}
Unsupervised Domain Adaptation (UDA) aims to learn a predictor model for an unlabeled domain by transferring knowledge from a separate labeled source domain. However, most of these conventional UDA approaches make the strong assumption of having access to the source data during training, which may not be very practical due to privacy, security and storage concerns. A recent line of work addressed this problem and proposed an algorithm that transfers knowledge to the unlabeled target domain from a single source model without requiring access to the source data. However, for adaptation purposes, if there are multiple trained source models available to choose from, this method has to go through adapting each and every model individually, to check for the best source. Thus, we ask the question: can we find the optimal combination of source models, with no source data and without target labels, whose performance is no worse than the single best source?
To answer this, we propose a novel and efficient algorithm which automatically combines the source models with suitable weights in such a way that it performs at least as good as the best source model. We provide intuitive theoretical insights to justify our claim. Furthermore, extensive experiments are conducted on several benchmark datasets to show the effectiveness of our algorithm, where in most cases, our method not only reaches best source accuracy but also outperforms it. 

\end{abstract}

\section{Introduction}

Deep neural networks have achieved proficiency in a multiple array of vision tasks \cite{he2016deep,long2015fully,kirillov2019panoptic,redmon2016you}, however, these models have consistently fallen short in adapting to visual distributional shifts \cite{luo2019taking}. Human recognition, on the other hand, is robust to such shifts, such as reading text in a new font or recognizing objects in unseen environments. Imparting such robustness towards distributional shifts to deep models is fundamental in applying these models to practical scenarios. 

Unsupervised domain adaptation (UDA) \cite{ben2010theory,saenko2010adapting} seeks to bridge this performance gap due to domain shift via adaptation of the model on small amounts of unsupervised data from the target domain. The majority of current approaches \cite{ganin2016domain,hoffman2018cycada} optimize a two-fold objective: (i) minimize the empirical risk on the source data, (ii) make the target and source features indistinguishable from each other. Minimizing distribution divergence between domains by matching the distribution statistical moments at different orders have also been explored extensively \cite{sun2016return,peng2019moment}.

\begin{figure}[t]
\includegraphics[width=0.48\textwidth]{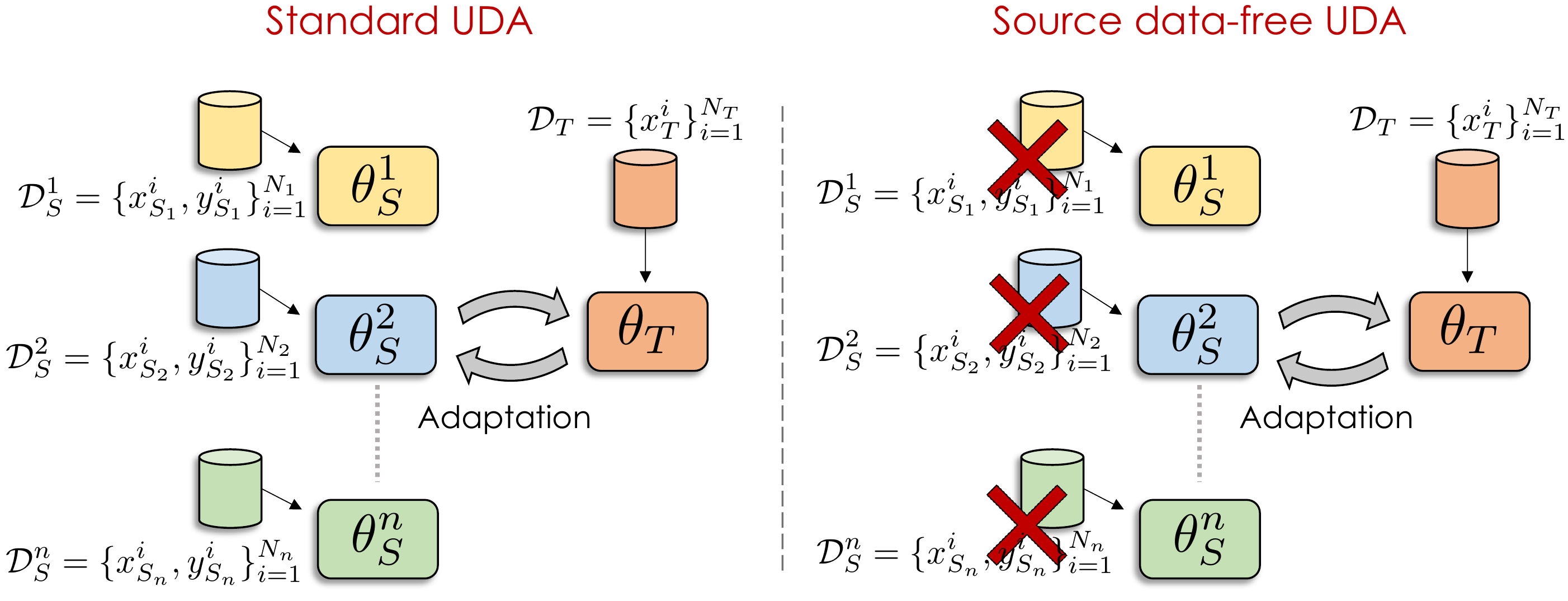}
\vskip 0.1in
\caption{\textbf{Problem setup.} Standard unsupervised multi-source domain adaptation (UDA) utilizes the source data, along with the models trained on the source, to perform adaptation on a target domain. In contrast, we introduce a setting which adapts multiple models without requiring access to the source data.} 
\label{fig:intro}
\end{figure}

A shortcoming of all the above approaches is the transductive scenario in which they operate, i.e., the source data is required for adaptation purposes. In a real-world setting, source data may not be available for a variety of reasons. Privacy and security are the primary concern, with the data possibly containing sensitive information. Another crucial reason is storage issues, i.e., source datasets may contain videos or high-resolution images and it might not be practical to transfer or store on different platforms. Consequently, it is imperative to develop unsupervised adaptation approaches which can adapt the source models to the target domain \textit{without access to the source data}. 

Recent works \cite{li2020model,liang2020we} attempt this by adapting a single source model to a target domain without accessing the source data. However, an underlying assumption of these methods is that the most correlated source model is provided by an oracle for adaptation purposes. A more challenging and practical scenario entails adaptation from a \textit{bag of source models} - each of these source domains are correlated to the target by different amounts and adaptation involves not only incorporating the combined prior knowledge from multiple models, but simultaneously preventing the possibility of negative transfer. In this paper, we introduce the problem of unsupervised \textit{multi-source adaptation without access to source data}. We develop an algorithm based on the principles of pseudo-labeling and information maximization and provide intuitive theoretical insights to show that our framework guarantees performance better than the best available source and minimize the effect of negative transfer. 

To solve this problem of multiple source model adaptation without accessing the source data, we deploy \textit{Information Maximization (IM)} loss \cite{liang2020we} on the weighted combination of target soft labels from all the source models. 
We also use the pseudo-label strategy inspired from deep cluster method \cite{caron2018deep}, along with the IM loss to minimize noisy cluster assignment of the features. The overall optimization jointly adapts the feature encoders from sources as well as the corresponding source weights, combining which the target model is obtained. 
\noindent {\bf Main Contributions.} We address the problem of multiple source UDA, with no access to the source data. Towards solving the problem, we make the following contributions: \\
    $\bullet$ We propose a novel UDA algorithm which operates without requiring access to the source data. We term it as Data frEe multi-sourCe unsupervISed domain adaptatiON (DECISION). Our algorithm automatically identifies the optimal blend of source models to generate the target model by optimizing a carefully designed unsupervised loss.\\
    $\bullet$ Under intuitive assumptions, we establish theoretical guarantees on the performance of the target model which shows that it is consistently at least as good as deploying the single best source model, thus, minimizing negative transfer.\\
    $\bullet$ We validate our claim by extensive numerical experiments, demonstrating the practical benefits of our approach.



\section{Related works}
In this section we present a brief overview of the literature in the area of unsupervised domain adaptation in both the single and multiple sources scenario, as well as the closely related setting of hypothesis transfer learning.\\
\noindent \textbf{Unsupervised domain adaptation.} UDA methods have been used for a variety of tasks, including image classification \cite{tzeng2017adversarial}, semantic segmentation \cite{paul2020domain} and object detection \cite{hsu2020progressive}. Besides the feature space adaptation methods based on the paradigms of moment matching \cite{sun2016return,peng2019moment} and adversarial learning \cite{ganin2016domain, tzeng2017adversarial}, recent works have explored pixel space adaptation via image translation \cite{hoffman2018cycada}. All existing UDA methods require access to labeled source data, which may not be available in many applications. \\
\noindent \textbf{Hypothesis transfer learning.} Similar to our objective, hypothesis transfer learning (HTL) \cite{singh2018nonparametric,perrot2015theoretical, ahmed2020camera} aims to transfer learnt source hypotheses to a target domain without access to source data. However, data is assumed to be labeled in the target domain in contrast to our scenario, limiting its applicability to real-world settings. Recently, \cite{li2020model,liang2020we} extend the standard HTL setting to unsupervised target data (U-HTL) by adapting single source hypotheses via pseudo-labeling. Our paper takes this one step further by introducing multiple source models, which may or may not be positively correlated with the target domain.\\ 
\noindent \textbf{Multi-source domain adaptation.} Multi-source domain adaptation (MSDA) extends the standard UDA setting by incorporating knowledge from multiple source models. Latent space transformation methods \cite{zhao2020multi} aim to align the features of different domains by optimizing a discrepancy measure or an adversarial loss. Discrepancy based methods seek to align the domains by minimizing measures such as maximum mean discrepancy \cite{guo2018multi,zhao2020multi} and R\'enyi-divergence \cite{hoffman2018algorithms}. Adversarial methods aim to make features from multiple domains indistinguishable to a domain discriminator by optimizing GAN loss \cite{xu2018deep}, $\mathcal{H}-$divergence \cite{zhao2018adversarial} and Wasserstein distance \cite{wang2019tmda,li2018extracting}. Domain generative methods \cite{russo2019towards,lin2020multi} use some form of domain translation, such as the CycleGAN \cite{zhu2017unpaired}, to perform adaptation at the pixel level. All these methods assume access to the source data during adaptation. 

\section{Methodology}
\begin{table}
\small
\centering
\resizebox{0.47\textwidth}{!}{
\begin{tabular}{@{}cLLLL@{}}
\toprule
\textsc{Method} & \textsc{Multiple domains} & \textsc{No source data} & \textsc{Source model} & \textsc{Unlabeled target data} \\ \midrule
UDA \cite{hoffman2018cycada}   
& \color{red}\xmark
& \color{red}\xmark    
& \color{ForestGreen}\cmark 
& \color{ForestGreen}\cmark \\
MSDA \cite{peng2019moment}
& \color{ForestGreen}\cmark
& \color{red}\xmark
& \color{ForestGreen}\cmark 
& \color{ForestGreen}\cmark \\
HTL \cite{singh2018nonparametric}
& \color{red}\xmark
& \color{ForestGreen}\cmark
& \color{ForestGreen}\cmark
& \color{red}\xmark \\
U-HTL \cite{liang2020we}
& \color{red}\xmark
& \color{ForestGreen}\cmark
& \color{ForestGreen}\cmark
& \color{ForestGreen}\cmark \\
DECISION(Ours)   
& \color{ForestGreen}\cmark
& \color{ForestGreen}\cmark
& \color{ForestGreen}\cmark
& \color{ForestGreen}\cmark \\
\bottomrule \\
\end{tabular}
}
\caption{\textbf{Comparison to different adaptation settings by attributes demonstrated in the paper.} Our proposed setting satisfies all the criteria desired in a holistic adaptation framework.}
\vskip -0.1in
\end{table}
\noindent\textbf{Problem setting.}
We address the problem of jointly adapting multiple models, trained on a variety of domains, to a new target domain with access to only samples without annotations from the target. In this work, we will be considering the adaptation of classification models with $K$ categories and the input space being $\mathcal{X}$. Formally, let us consider we have a set of source models $\{\theta_S^j\}_{j=1}^n$, where the $j^{th}$ model $\theta_S^j: \mathcal{X} \rightarrow \mathbb{R}^K$, is a classification model learned using the source dataset $\mathcal{D}_S^j=\{x_{S_j}^i, y_{S_j}^i\}_{i=1}^{N_j}$, with $N_j$ data points, where $x_{S_j}^i$ and $y_{S_j}^i$ denote the $i$-th source image and the corresponding label respectively. Now, given a target unlabeled dataset $\mathcal{D}_T=\{x^i_T\}_{i=1}^{N_T}$, the problem is to learn a classification model $\theta_T: \mathcal{X} \rightarrow \mathbb{R}^K$, using only the learned source models, without any access to the source datasets. Note that this is different from multi-source domain adaptation methods in literature, which also utilize the source data while learning the target model $\theta_T$.

\begin{figure*}[t]
\centering
\includegraphics[width=\textwidth,height=0.45\textwidth]{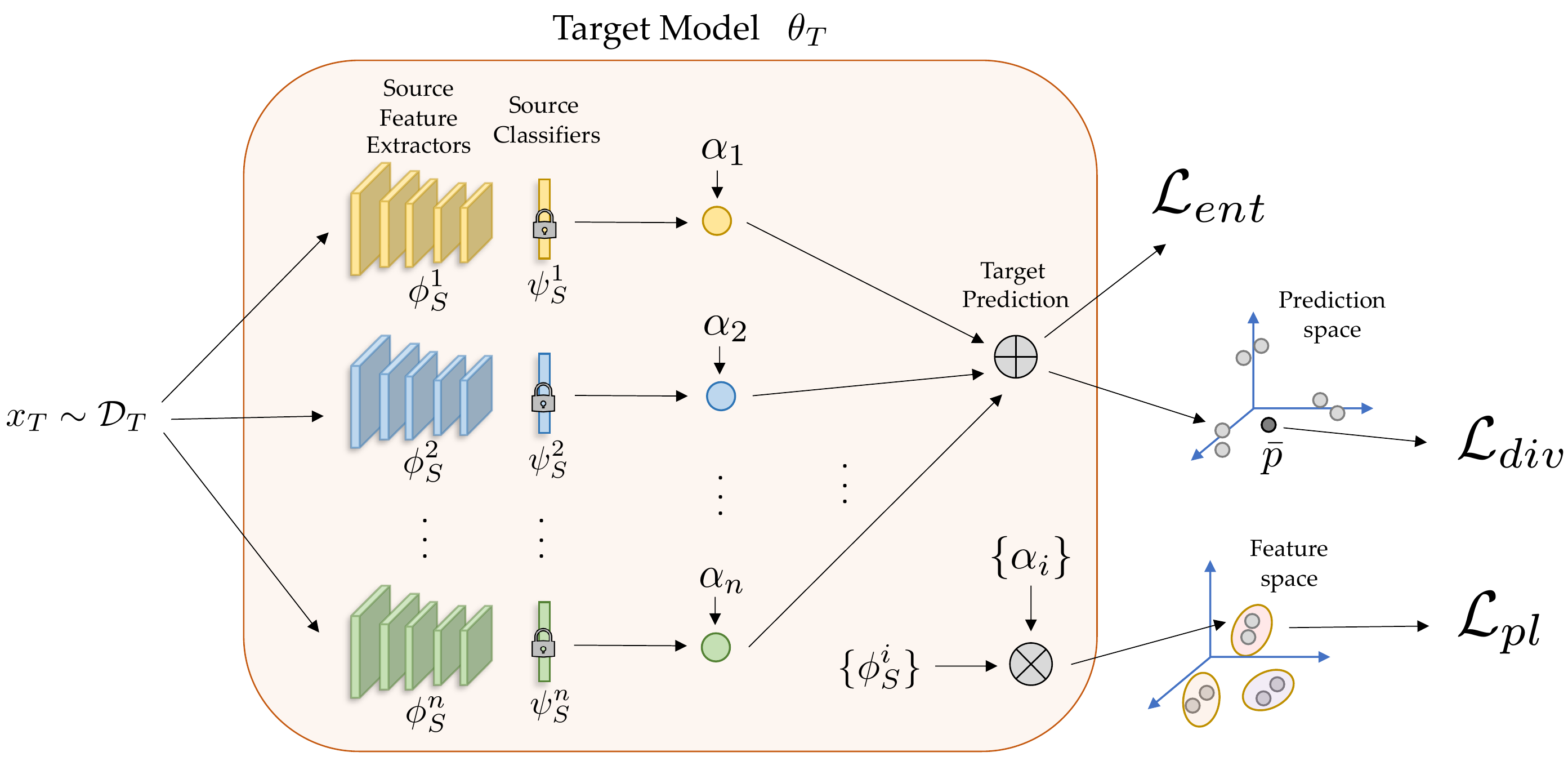} 
\vspace{0.1mm}
\caption{\textbf{Overall framework of our approach:} We freeze the final classification layers of all the sources and jointly optimize for the source feature encoders along with it's corresponding weights to get the target predictor by combining those. }
\vskip -0.1in
\label{framework}
\end{figure*}

\noindent
\textbf{Overall Framework.} We can decompose each of the source models into two modules: the feature extractor $\phi_S^i:\mathcal{X} \rightarrow \mathbb{R}^{d_i}$ and the classifier $\psi^i_S: \mathbb{R}^{d_i} \rightarrow \mathbb{R}^K$. Here, $d_i$ refers to the feature dimension of the $i$-th model while $K$ refers to the number of categories.
We aim to estimate the target model $\theta_T$ by combining knowledge only from the given source models in a manner that automatically rejects poor source models, i.e., those which are irrelevant for the target domain.


At the core of our framework lies a model aggregation scheme \cite{mansour2009domain,hoffman2018algorithms}, wherein we learn a set of weights $\{\alpha_i\}_{i=1}^n$ corresponding to each of the source models,
such that, $\alpha_k \geq 0$ and $\sum_{k=1}^n \alpha_k=1$. These weights 
represent a probability mass function over the source domains, with a higher value implying higher transferability from that particular domain, and are used to combine the source hypotheses accordingly. However, unlike previous works, we jointly adapt each individual model and simultaneously learn these weights by utilizing solely the unlabeled target instances. In what follows, we describe our training strategy used to achieve this in detail.

\subsection{Weighted Information Maximization} As we do not have access to the labeled source or target data, we propose to fix the source classifiers,$\{\psi^i_S\}_{i=1}^n$, since it contains the class distribution information of the source domain  and adapt solely the feature maps $\{\phi_S^i\}_{i=1}^n$ via the principle of information maximization \cite{bridle1992unsupervised,krause2010discriminative,oymak2020statistical,liang2020we}. Our motivation behind the adaptation process stems from the \textit{cluster assumption} \cite{chapelle2009semi} in semi-supervised learning, which hypothesizes that the discriminative model’s decision boundaries should be located in regions of the input space which are not densely populated. To achieve this, we minimize a conditional entropy term (i.e., for a given input example) \cite{grandvalet2005semi} as follows:
\begin{equation}
    \mathcal{L}_{\text{ent}} = -\mathbb{E}_{x_T \in \mathcal{D}_T}\Big[ \sum_{j=1}^K  \delta_j(\theta_T(x_T)) \log(\delta_j(\theta_T(x_T)))\Big]
\label{Im_ent}
\end{equation}
where $\theta_T(x_T)=\sum_{j=1}^n \alpha_j \theta_S^j(x_T)$, and $\delta(\cdot)$ denotes the softmax operation with $\delta_j(v)=\frac{\exp(v_j)}{\sum_{i=1}^K \exp(v_i)}$ for $v \in \mathbb{R}^K$. Intuitively, if a source $\theta_S^j$ has good transferability on the target and consequently, has smaller value of the conditional entropy, optimizing the term \eqref{Im_ent} over $\left\{\theta_S^j,\alpha_j\right\}$, will result in higher value of $\alpha_j$ than rest of the weights.

While entropy minimization effectively captures the cluster assumption when training with partial labels, in an unsupervised setting, it may lead to degenerate solutions, such as, always predicting a single class in an attempt to minimize conditional entropy. To control such degenerate solutions, we incorporate the idea of class diversity: configurations in which class labels are assigned evenly across the dataset are preferred. A simple way to encode our preference
towards class balance is to maximize the entropy of the empirical label distribution \cite{bridle1992unsupervised} as follows,
\begin{equation}
\label{Im_div}
    \mathcal{L}_{\text{div}} =  \sum_{j=1}^K -\Bar{p}_j\log\Bar{p}_j
\end{equation}
where $\Bar{p}= \mathbb{E}_{x_T \in \mathcal{D}_T}[\delta(\theta_T(x_T))]$. Combining the terms \eqref{Im_ent} and \eqref{Im_div}, we arrive at,
\begin{equation}
\label{Im_tot}
    \mathcal{L}_{\text{IM}} = \mathcal{L}_{\text{div}} -
    \mathcal{L}_{\text{ent}}
\end{equation}
which is the empirical estimate of the mutual information between the target data and the labels under the aggregate model $\theta_T$. Although maximizing this loss makes the predictions on the target data more confident and globally diverse, it may sometime still fail to restrict erroneous label assignment. Inspired by \cite{liang2020we}, we propose a pseudo-labeling strategy in an effort to contain this mislabeling.

\subsection{Weighted Pseudo-labeling}
As a result of domain shift, information maximization may result in some instances being clubbed with the wrong class cluster. These wrong predictions get reinforced over the course of training and lead to a phenomenon termed as \textit{confirmation bias} \cite{tarvainen2017mean}. Aiming to contain this effect we adopt a self-supervised clustering strategy \cite{liang2020we} inspired from the DeepCluster technique \cite{caron2018deep}. 

First, we calculate the cluster centroids induced by each source model for the whole target dataset as follows,
\begin{equation}
  \mu_{k_j}^{(0)} = \frac{\sum_{x_T\in \mathcal{D}_T}\delta_k(\hat{\theta}_S^j(x_T))\hat{\phi}_S^j(x_T)}{\sum_{x_T\in \mathcal{D}_T}\delta_k(\hat{\theta}_S^j(x_T))}
\end{equation}
where the cluster centroid of class $k$ obtained from source $j$ at iteration $i$ is denoted as $\mu_{k_j}^{(i)}$, and $\hat{\theta}_S^j=(\psi_S^j \circ \hat{\phi}_S^j)$ denotes the source from the previous iteration. These source-specific centroids are combined in accordance to the current aggregation weights on each source model as follows,
\begin{equation}
    \mu_k^{(0)} = \sum_{j=1}^n \alpha_j \mu_{k_j}^{(0)}
\end{equation}
Next, we compute the pseudo-label of each sample by assigning it to its nearest cluster centroid in the feature space,
\begin{equation}
    \hat{y}_{T}^{(0)} = \text{arg} \ \underset{k}{\text{min}} \ \|\hat{\theta}_T(x_T)-\mu_k^{(0)}\|_2^2
\label{pseudo}
\end{equation}
We reiterate this process to get the updated centroids and pseudo-labels as follows,
\begin{equation}
    \mu_{k_j}^{(1)} = \frac{\sum_{x_T\in \mathcal{D}_T} \mathbbm{1}\{\hat{y}_{T}^{(0)}=k\}\hat{\phi}_S^j(x_T)}{\sum_{x_T\in \mathcal{D}_T}\mathbbm{1}(\hat{y^{t_0}}=k)}
\end{equation}
\begin{equation}
    \mu_k^{(1)} = \sum_{j=1}^n \alpha_j \mu_{k_j}^{(1)}
\end{equation}
\begin{equation}
    \hat{y}_{T}^{(1)} = \text{arg} \ \underset{k}{\text{min}} \ \|\hat{\theta}_T(x_T)-\mu_k^{(1)}\|_2^2
\end{equation}
where $\mathbbm{1}(\cdot)$ is an indicator function which gives a value of $1$ when the argument is true. While this alternating process of computing cluster centroids and pseudo-labeling can be repeated multiple times to get stationary pseudo-labels, one round is sufficient for all practical purposes. 
We then obtain the cross-entropy loss w.r.t. these pseudo-labels as follows:
\begin{equation}
    \mathcal{L}_{\text{pl}}(Q_T,\theta_T) = - \mathbb{E}_{x_T\in \mathcal{D}_T}\sum_{k=1}^K \mathbbm{1}\{\hat{y}_T=k\} \log \delta_k (\theta_T(x_T)).
\label{Im_pl}
\end{equation}
Note that the pseudo-labels are updated regularly after a certain number of iterations as discussed in Section \ref{experiments}.

\subsection{Optimization}
In summary, given $n$ source hypothesis $\{\theta_S^j\}_{j=1}^n=\{\psi_S^j \circ \phi_S^j\}_{j=1}^n$ and target data $\mathcal{D}_T=\{x^i_T\}_{i=1}^{n_T}$, we fix the classifier from each of the sources and optimize over the parameters of $\{\phi_S^j\}_{j=1}^n$ and the aggregation weights $\{\alpha_j\}_{j=1}^n$. The final objective is given by,
\begin{equation}
\begin{split}
    \mathcal{L}_{tot}=\mathcal{L}_{\text{ent}} - \mathcal{L}_{\text{div}} + \lambda \mathcal{L}_{\text{pl}}
\end{split}
\label{overall_obj}
\end{equation}

The above objective is used to solve the following optimization problem,
\begin{mini}|l|
{\{\phi_S^j\}_{j=1}^n,\{\alpha_j\}_{j=1}^n}{\mathcal{L}_{tot}}{}{}
\addConstraint {\alpha_j \geq 0, \forall j \in \{1, 2, \dots, n\}}
\addConstraint {\sum_{j=1}^n \alpha_j=1}
\label{opt:main_opt}
\end{mini}

Once we obtain the optimal set of $\phi_S^{j*}$ and $\alpha_j^*$, the optimal target hypothesis is computed as
$\theta_T= \sum_{j=1}^n \alpha_j^* (\psi_S^j \circ \phi_S^{j*})$.
To solve the optimization \eqref{opt:main_opt} we follow the steps of Algorithm~\eqref{algo1} stated below.

\begin{algorithm}[]
\SetAlgoLined
\textbf{Input:} Trained source models $\{\theta_S^j\}_{j=1}^n=\{\psi_S^j\circ\phi_S^j\}_{j=1}^n$, unlabeled target data $\{x^i_T\}_{i=1}^{N_T}$,weight parameters $\{\alpha_j\}_{j=1}^n$,max number of epochs $E$, regularization parameter $\lambda$,number of batches $B$  \\
\textbf{Output:} Optimal feature enocoders $\{\phi_S^{j*}\}_{j=1}^n$, optimal source weights $\{\alpha_j^*\}_{j=1}^n$ \\
\textbf{Initialization}: Freeze final classification layers $\{\psi_S^j\}_{j=1}^n$, set $\alpha_j=1$ for all $j$\\
 \For{$epoch=1$ $\textbf{to}$ $E$}{
  Calculate pseudo-labels from equation~\eqref{pseudo} \\
  Calculate the mean embedding $\bar{p}$ from equation~\eqref{Im_div} \\
  \For{$iteration=1$ \textbf{to} $B$}{
  
  Sample a mini batch from target and pass it through each of the source models \\
  
  calculate all the losses from equation~\eqref{Im_ent},\eqref{Im_div} and ~\eqref{Im_pl} \\
  
  calculate total loss from equation~\eqref{Im_tot} \\
  
  Update the parameters in $\{\phi_S^{j}\}_{j=1}^n$ and $\{\alpha_j\}_{j=1}^n$ from optimization\eqref{opt:main_opt} \\
  
  Make $\alpha$ positive by setting $\alpha_j=1/(1+e^{-\alpha_j})$ \\
 
  Normalize $\alpha$ by setting $\alpha_j=\alpha_j/\sum_{i=1}^n \alpha_i$
  }
 
 }
 \caption{Algorithm to Solve Eq.~\ref{opt:main_opt}} 
 \label{algo1}
\end{algorithm}

\section{Theoretical Insights}
\noindent \textbf{Theoretical motivation behind our approach.}
Our algorithm aims to find the optimal weights $\{\alpha_j\}_{j=1}^n$ for each source and takes a convex combination of the source predictors to obtain the target predictor. Here, we shall show that under intuitive assumptions on the source and target distributions, there exists a simple choice of target predictor, which can perform better than or equal to the best source model being applied directly on the target data. 


Formally, let $L$ be a loss function which maps the pair of model-predicted label and the ground-truth label to a scalar. Denote the expected loss over $k$-th source distribution $Q^k_S$ using the source predictor $\theta$ via $\mathcal{L}(Q^k_S,\theta)=\mathbb{E}_x[L(\theta(x),y)]=\int_{x} L(\theta(x),y)Q^k_S(x)dx$.  Now let $\theta_S^k$ be the optimal source predictor given by $\theta^k_S = \text{arg} \ \underset{\theta}{\text{min}} \ {\cal{L}}(Q^k_S, \theta)\ \forall \ 1\leq k\leq n$.
Let us also assume that the target distribution is in the span of source distributions. We formalize this by expressing the target distribution as an affine combination of source distributions i.e., $Q_T(x)={\sum_{k=1}^n \lambda_k Q^k_S(x): \lambda_k \geq 0, \sum_{k=1}^n \lambda_k =1}$. Under this assumption, if we express our target predictor
as $\theta_T(x)=\sum_{k=1}^n \frac{\lambda_k Q^k_S(x)}{\sum_{j=1}^n \lambda_j Q^j_S(x)} \theta_{S}^k(x)$, then we establish our theoretical claim stated in Lemma~\ref{lemma1}.

\begin{lemma}
\label{lemma1}
Assume that the loss $L(\theta(x),y)$ 
is convex in its first argument and that there exists a $\lambda \in \mathbb{R}^n$ where $\lambda \geq 0$ and $\lambda^\top \mathbbm{1}=1$, such that the target distribution is exactly equal to the mixture of source distributions, i.e., $Q_T=\sum_{i=1}^n \lambda_i Q^i_S$. Set the target predictor as the following convex combination of the optimal source predictors
$$\theta_T(x)=\sum_{k=1}^n \frac{\lambda_k Q^k_S(x)}{\sum_{j=1}^n \lambda_j Q^j_S(x))} \theta_{S}^k(x).$$ 
Recall the pseudo-labeling loss \eqref{Im_pl}. Then, for this target predictor, over the target distribution, the unsupervised loss induced by the pseudo-labels and the supervised loss are both less than or equal to the loss induced by the best source predictor. In particular, 
\[
\mathcal{L}(Q_T,\theta_T) \leq \underset{1\leq j\leq n}{\min}\ \mathcal{L}(Q_T,\theta_{S}^j).
\]
Let $\alpha=\arg\min_{1\leq j\leq n}\ \mathcal{L}(Q_T,\theta_{S}^j)$. Additionally, this inequality is strict if the entries of $\lambda$ are strictly positive and there exists a source $i$ for which the strict inequality $\mathcal{L}(Q^i_S,\theta_{S}^i)< \mathcal{L}(Q^i_S,\theta_{S}^{\alpha})$ holds.
\end{lemma}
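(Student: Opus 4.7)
The plan is to prove this by a two-step argument: first use convexity of the loss (Jensen's inequality) to bound the target loss by a mixture of source losses on their own distributions, then use the optimality of each $\theta_S^k$ on $Q_S^k$ to compare against any single source predictor applied to the target.

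First I would expand $\mathcal{L}(Q_T,\theta_T)$ by substituting the mixture form $Q_T(x)=\sum_j \lambda_j Q_S^j(x)$ and the definition of $\theta_T$:
\begin{equation*}
\mathcal{L}(Q_T,\theta_T) = \int L\!\left(\sum_{k=1}^n \frac{\lambda_k Q_S^k(x)}{\sum_{j} \lambda_j Q_S^j(x)} \theta_S^k(x),\, y\right) \sum_{j=1}^n \lambda_j Q_S^j(x)\, dx.
\end{equation*}
The key observation is that, for each fixed $x$, the coefficients $\beta_k(x) = \lambda_k Q_S^k(x)/\sum_j \lambda_j Q_S^j(x)$ are nonnegative and sum to $1$, so $\theta_T(x)$ is a convex combination of the $\theta_S^k(x)$. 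Applying convexity of $L$ in its first argument inside the integral yields the pointwise bound $L(\theta_T(x),y) \le \sum_k \beta_k(x) L(\theta_S^k(x),y)$. Multiplying by $\sum_j \lambda_j Q_S^j(x)$ causes a cancellation, leaving $\sum_k \lambda_k Q_S^k(x) L(\theta_S^k(x),y)$ under the integral, so that
\begin{equation*}
\mathcal{L}(Q_T,\theta_T) \le \sum_{k=1}^n \lambda_k\, \mathcal{L}(Q_S^k,\theta_S^k).
\end{equation*}

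Next I would invoke the optimality of each $\theta_S^k$: by definition $\mathcal{L}(Q_S^k,\theta_S^k) \le \mathcal{L}(Q_S^k,\theta_S^j)$ for every $j$. Fixing an arbitrary $j$ and summing this inequality weighted by $\lambda_k$ gives $\sum_k \lambda_k \mathcal{L}(Q_S^k,\theta_S^k) \le \sum_k \lambda_k \mathcal{L}(Q_S^k,\theta_S^j) = \mathcal{L}(Q_T,\theta_S^j)$, where the last equality uses linearity of the loss in the distribution argument together with $Q_T=\sum_k \lambda_k Q_S^k$. Since $j$ was arbitrary, taking the minimum over $j$ completes the inequality $\mathcal{L}(Q_T,\theta_T) \le \min_{1\le j\le n}\mathcal{L}(Q_T,\theta_S^j)$. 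The same chain of reasoning applies verbatim to the pseudo-labeling loss \eqref{Im_pl}, because cross-entropy is convex in its first argument; I would note this explicitly to cover the ``both less than or equal'' clause.

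For the strict inequality under the additional hypotheses, I would observe that if $\lambda_k > 0$ for all $k$ and there exists an index $i$ with $\mathcal{L}(Q_S^i,\theta_S^i) < \mathcal{L}(Q_S^i,\theta_S^\alpha)$, then the weighted sum against $\theta_S^\alpha$ inherits the strict inequality in coordinate $i$, giving $\sum_k \lambda_k \mathcal{L}(Q_S^k,\theta_S^k) < \mathcal{L}(Q_T,\theta_S^\alpha) = \min_j \mathcal{L}(Q_T,\theta_S^j)$. Chaining with the Jensen bound yields the strict version. I do not anticipate a serious obstacle; the only subtle point is making sure that the convex-combination weights $\beta_k(x)$ are interpreted pointwise in $x$ so that Jensen's inequality applies inside the integral, and that the denominator $\sum_j \lambda_j Q_S^j(x)$ cancels cleanly against the target density to expose the source-wise expectations.
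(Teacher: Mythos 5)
Your proposal is correct and follows essentially the same route as the paper's proof: a pointwise Jensen bound with the weights $\lambda_k Q_S^k(x)/\sum_j \lambda_j Q_S^j(x)$, cancellation against the mixture density to obtain $\sum_k \lambda_k \mathcal{L}(Q_S^k,\theta_S^k)$, then source-wise optimality and the decomposition $\mathcal{L}(Q_T,\theta_S^j)=\sum_k \lambda_k \mathcal{L}(Q_S^k,\theta_S^j)$, with the same chaining argument for strictness. Your explicit remark that the cross-entropy pseudo-labeling loss is convex in its first argument, so the same bound covers the unsupervised loss, is a small point the paper leaves implicit.
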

\begin{proof}
See proof in the supplementary.
\end{proof}
Observe that the expected loss $\mathcal{L}$ defined in Lemma~\ref{lemma1} is the supervised loss where one does have the label information. Our proposed target predictor $\theta_T$ achieves a supervised loss at least as good as the best individual source model. Importantly, the inequality is strict under a natural mild condition: The best individual source model $\beta$ (for the target $Q_T$) is strictly worse than some source model $i$ on the source distribution $Q_{S}^i$.
We also note the key differences between our algorithm and the predictor in Lemma \ref{lemma1}. In our algorithm's combination rule, we fine-tune the feature extractors of each source model unlike Lemma \ref{lemma1}. However each source has an individual weight which is agnostic to the source data, whereas Lemma \ref{lemma1} uses different weights per input instance.  Below we provide an intuitive justification for choosing this input-agnostic weighting strategy.




Since we do not know the source distributions (due to the unavailability of source data), let us consider the least informative of all the distributions i.e.~uniform distribution for sources by the \textit{Principle of Maximum Entropy} \cite{jaynes1957information}. This uniformity is assumed over the target support set $\mathcal{X}$. In what follows, we will consider the restrictions of the source distributions to the target support $\cal{X}$. Mathematically, our assumption is $Q^k_S(x)=c_k \mathcal{U}(x)$ when restricted to the support set $x  \in \mathcal{X}$, where $c_k$ is a scaling factor which captures the relative contribution of source $k$ and $\mathcal{U}(x)$ has value $1$. If we plug this value of the distribution in the combination rule in Lemma~\ref{lemma1}, we get $\theta_T(x)=\sum_{k=1}^n \frac{\lambda_k c_k}{\sum_{j=1}^n \lambda_j c_j} \theta_{S}^k(x)$ (see supplementary for more details).
This term consisting of $\lambda_k$ and $c_k$
essentially becomes the weighting term  $\alpha_k$ in our algorithm. 
We put this value of $\theta_T$ to solve the optimization~\eqref{opt:main_opt} jointly with respect to this $\alpha_k$ and $\phi_S^k$. 
Thus, our optimization will return us a favorable combination of source hypotheses, satisfying the bounds in Lemma \ref{lemma1}, under the uniformity assumption of source distributions.


\section{Experiments} \label{experiments}
\begin{table*}[t]
\centering
\resizebox{\textwidth}{!}{
\begin{tabular}{@{}llllllll@{}}
\toprule[1.2pt]
\textsc{Source}                       & \textsc{Method}        & \begin{tabular}[c]{@{}l@{}}\textsc{mt,up,sv,sy}\\ $\rightarrow$ \textsc{mm}\end{tabular} & \begin{tabular}[c]{@{}l@{}}\textsc{mm,up,sv,sy}\\ $\rightarrow$ \textsc{mt}\end{tabular} & \begin{tabular}[c]{@{}l@{}}\textsc{mm,mt,sv,sy}\\ $\rightarrow$ \textsc{up}\end{tabular} & \begin{tabular}[c]{@{}l@{}}\textsc{mm,mt,up,sy}\\ $\rightarrow$ \textsc{sv}\end{tabular} & \begin{tabular}[c]{@{}l@{}}\textsc{mm,mt,up,sv}\\ $\rightarrow$ \textsc{sy}\end{tabular} & 
\textsc{Avg.} \\ \midrule[1.1pt]

\multirow{6}{*}{Multiple(w)} & DAN\cite{long2015learning}   &63.7 &96.31 &94.2 &62.5 &85.4 &80.4 \\
                        &DANN\cite{ganin2015unsupervised}   &71.3 &97.6 &92.3 &63.5 &85.3 &82.0 \\ 
                        &MCD\cite{saito2018maximum} &72.5 &96.21 &95.3 &78.9 &87.5 &86.1\\
                        &CORAL\cite{sun2015return} &62.5 &97.2 &93.4 &64.4 &82.7 &80.1\\
                        &ADDA\cite{tzeng2017adversarial} &71.6 &97.9 &92.8 &75.5 &86.5 &84.8\\
                        &$\text{M}^3$SDA-$\beta$\cite{peng2019moment}
                        &72.8 &98.4 &96.1 &81.3 &89.6 &87.6 \\
\midrule

\multirow{4}{*}{Single(w/o)} & Source-best   &60.7 &98.2 &74.5 &89.5 &89.4 &82.5 \\
                        & Source-worst   &21.3 &64 &29.3 &7.4 &25.7 &29.5 \\
                        & SHOT\cite{liang2020we}-best  &\textbf{94.0} &98.7 &\textbf{97.9} &\textbf{83.5} &\textbf{97.5} &\textbf{94.3} \\ 
                        & SHOT\cite{liang2020we}-worst &44.5 &97.2 &96.2 &29.5 &32.5 &60.0\\
\midrule
\multirow{2}{*}{Multiple(w/o)} & SHOT\cite{liang2020we}-Ens &90.4 &98.9 &97.7 &58.3 &83.9 &85.8 \\
                          & DECISION(Ours)     &93.0 &\textbf{99.2} &97.8 &82.6 &\textbf{97.5} &94.0 \\
\bottomrule\\

\end{tabular}
}

\caption{\textbf{Results on digit recognition.} MT, MM, UP, SV, SY are abbreviations of \textit{MNIST, MNIST-M, USPS, SVHN} and \textit{Synthetic Digits} respectively. Multiple and Single denotes the methods which uses multiple and single sources respectively for domain adaptation, while (w) and (w/o) are abbreviations of \textit{with source data} and \textit{without source data} respectively. \textit{Source} is the accuracy with the unadapted models, whereas \textit{-best} and \textit{-worst} refer to the best and worst sources.
}
\label{tab:digit}
\end{table*}
\begin{table}[t]
\resizebox{0.48\textwidth}{!}{
\begin{tabular}{@{}llllll@{}}
\toprule[1.2pt]
\textsc{Source}                       & \textsc{Method}        & \begin{tabular}[c]{@{}l@{}}\textsc{A,D}\\ $\rightarrow$ \textsc{W}\end{tabular} & \begin{tabular}[c]{@{}l@{}}\textsc{A,W}\\ $\rightarrow$ \textsc{D}\end{tabular} & \begin{tabular}[c]{@{}l@{}}\textsc{D,W}\\ $\rightarrow$ \textsc{A}\end{tabular} & 
\textsc{Avg.} \\ \midrule[1.1pt]
\multirow{4}{*}{Single} & Source-best   &96.3 &98.4 &62.5 &85.7 \\
                        & Source-worst  &75.6 &80.9 &62.0 &72.8 \\  
                        & SHOT \cite{liang2020we}-best  &98.2 &99.6 &75.1 &90.9 \\ 
                        & SHOT \cite{liang2020we}-worst &90.6 &94.2 &72.9 &85.9 \\
\midrule
\multirow{2}{*}{Multiple} & SHOT \cite{liang2020we}-Ens &94.9 &97.8 &75.0 &89.3 \\
                          & DECISION(Ours)     &\textbf{98.4} &\textbf{99.6} &\textbf{75.4} &\textbf{91.1} \\
\bottomrule \\
\end{tabular}
}
\caption{\textbf{Results on Office}: A,D and W are abbreviations of \textit{Amazon}, \textit{DSLR} and \textit{Webcam}. For single source methods, Source-best and Source-worst denote the best and worst unadapted source models, whereas SHOT-best, SHOT-worst are the best and worst accuracies of adapted source models.  
}
\label{tab:office}
\end{table}
\noindent\textbf{Datasets.} To test the effectiveness of our algorithm, we experiment on various visual benchmarks described as follows. \\
    $\bullet$ \textit{Office} \cite{hoffman2018cycada}: In this benchmark DA dataset there are three domains under the office environment namely Amazon (A), DSLR (D) and Webcam (W) with a total of 31 object classes in each domain.\\
    $\bullet$ \textit{Office-Caltech} \cite{gong2012geodesic}: This is an extension of the Office dataset, with Caltech-256 (C) added on top of the 3 existing domains by extracting 10 classes common to all domains.\\
    $\bullet$ \textit{Office-Home} \cite{venkateswara2017deep}: Office-Home consists of four domains, namely, Art(Ar), Clipart(Cl), Product(Pr) and Real-world(Re). Each of these domains contain 65 object classes.\\
    $\bullet$ \textit{Digits}: The Digits dataset is a benchmark for DA in digit recognition. Following \cite{peng2019moment}, we utilize five subsets, namely MNIST (MT), USPS (UP), SVHN (SV), MNIST-M (MM) and Synthetic Digits (SY) for our experiments.

In all of our experiments, we take turns and fix one of the domains as the target and the rest as the source domains. The source data is discarded after training the source models.

\noindent \textbf{Baseline Methods.} We compare our method against a wide array of baselines. Similar to our setting, SHOT \cite{liang2020we} attempts unsupervised adaptation without source data. However, it adapts a single source at a time. We compare against a multi-source extension of SHOT via ensembling - we pass the target data through each of the adapted source model and take an average of the soft prediction to obtain the test label. In our comparisons, we name this method SHOT-ens. We also compare against single source baselines, namely SHOT-best and SHOT-worst, which refer to the best adapted source model and the worst one respectively, learned using SHOT. Additionally, we run comparisons against traditional multi-source adaptation methods $\text{M}^3$SDA-$\beta$\cite{peng2019moment}, DAN \cite{long2015learning}, DANN \cite{ganin2015unsupervised}, MCD \cite{saito2018maximum}, CORAL \cite{sun2015return}, ADDA \cite{tzeng2017adversarial}, DCTN\cite{xu2018deep}. All these methods, except for SHOT, have access to source data during adaptation.
\begin{table*}[t]
\small
\centering
\begin{tabular}{@{}lllllll@{}}
\toprule[1.2pt]
\textsc{Source}                       & \textsc{Method}        & \begin{tabular}[c]{@{}l@{}}\textsc{Ar,Cl,Pr}\\ $\rightarrow$ \textsc{Rw}\end{tabular} & \begin{tabular}[c]{@{}l@{}}\textsc{Ar,Cl,Rw}\\ $\rightarrow$ \textsc{Pr}\end{tabular} & \begin{tabular}[c]{@{}l@{}}\textsc{Ar,Pr,Rw}\\ $\rightarrow$ \textsc{Cl}\end{tabular} & 
\begin{tabular}[c]{@{}l@{}}\textsc{Cl,Pr,Rw}\\ $\rightarrow$ \textsc{Ar}\end{tabular} & 
\textsc{Avg.} \\ \midrule[1.1pt]
\multirow{4}{*}{Single(w/o)} & Source-best   &74.1 &78.3 &46.2 &65.8 &66.1\\
                        & Source-worst   &64.8 &62.8 &40.9 &53.3 &55.5\\
                        & SHOT\cite{liang2020we}-best &81.3 &83.4 &57.2 &72.1 &73.5\\ 
                        & SHOT\cite{liang2020we}-worst &80.8 &77.9 &53.8 &66.6 &69.8\\ 
\midrule
\multirow{2}{*}{Multiple(w/o)} & SHOT\cite{liang2020we}-Ens &82.9 &82.8 &59.3 &72.2 &74.3\\
                          & DECISION(Ours)    &\textbf{83.6} &\textbf{84.4} &\textbf{59.4} &\textbf{74.5} &\textbf{75.5}\\
\bottomrule \\
\end{tabular}
\caption{\textbf{Results on Office-Home.}: AR,CL,RW and PR are abbreviations of \textit{Art}, \textit{Clipart},\textit{Real-world} 
and \textit{Product}. We see that our method outperforms all the baselines including the best source accuracy as well as ensemble method. The abbreviations under the column SOURCE and METHOD are same as described in Table~\ref{tab:digit}.}
\label{tab:office-home}
\end{table*}
\begin{table*}[t]
\small
\centering
\begin{tabular}{@{}lllllll@{}}
\toprule[1.2pt]
\textsc{Source}                       & \textsc{Method}        & \begin{tabular}[c]{@{}l@{}}\textsc{A,C,D}\\ $\rightarrow$ \textsc{W}\end{tabular} & \begin{tabular}[c]{@{}l@{}}\textsc{A,C,W}\\ $\rightarrow$ \textsc{D}\end{tabular} & \begin{tabular}[c]{@{}l@{}}\textsc{C,D,W}\\ $\rightarrow$ \textsc{A}\end{tabular} & \begin{tabular}[c]{@{}l@{}}\textsc{A,D,W}\\ $\rightarrow$ \textsc{C}\end{tabular} 
& \textsc{Avg.} \\ \midrule[1.1pt]
\multirow{5}{*}{Multiple(w)} & ResNet-101\cite{he2016deep}   &99.1 &98.2 &88.7 &85.4 &92.9  \\
                             & DAN\cite{long2015learning}   &99.5 &99.1 &91.6 &89.2 &94.8  \\
                             & DCTN\cite{xu2018deep}   &99.4 &99.0 &92.7 &90.2 &95.3  \\
                             & MCD\cite{saito2018maximum}   &99.5 &99.1 &92.1 &91.5 &95.6  \\
                             &$\text{M}^3$SDA-$\beta$\cite{peng2019moment}   &99.5 &99.2 &94.5 &99.2 &96.4  \\
                             
\midrule
\multirow{4}{*}{Single(w/o)} & Source-best   &98.9 &99.3 &94.8 &86.5 &94.9\\
                        & Source-worst  &86.7 &89.8 &89.6 &83.2 &87.4\\  
                        & SHOT-best  &99.6 &100 &95.8 &95.5 &97.7\\ 
                        & SHOT-worst &97.3 &96.2 &95.7 &93.9 &95.8\\
\midrule
\multirow{2}{*}{Multiple(w/o)} & SHOT-Ens &99.6 &96.8 &95.7 &95.8 &97.0\\
                          & DECISION(Ours)     &\textbf{99.6} &\textbf{100} &\textbf{95.9} &\textbf{95.9}  &\textbf{98.0}\\
\bottomrule\\

\end{tabular}
\caption{\textbf{Results on Office-Caltech Dataset}:A,D,C and W are abbreviations of \textit{Amazon}, \textit{DSLR}, \textit{Caltech-256} and \textit{Webcam}. Our method consistently outperform all the baselines across all the domains as target.The abbreviations under the column SOURCE and METHOD are same as described in Table~\ref{tab:digit}.}

\label{tab:office-cal}
\end{table*}

\subsection{Implementation details} 
\noindent \textbf{Network architecture.} For the object recognition tasks, we use a pre-trained ResNet-50 \cite{he2016deep} as the feature extractor backbone, similar to \cite{peng2019moment,xu2019larger}. Following \cite{liang2020we, ganin2015unsupervised}, we replace the penultimate fully-connected layer with a bottleneck layer and a task specific classifier layer. Batch normalization \cite{ioffe2015batch} is utilized after the bottleneck layer, along with weight normalization \cite{salimans2016weight} in the final layer. For the digit recognition task, we use a variant of the LeNet \cite{lecun1998gradient} similar to \cite{liang2020we}. \\
\noindent \textbf{Source model training.} Following \cite{liang2020we}, we train the source models using smooth labels, instead of the usual one-hot encoded labels. This increases the robustness of the model and helps in the adaptation process by encouraging features to lie in tight,
evenly separated clusters \cite{muller2019does}. The maximum number of epochs for Digits, Office, Office-Home and Office-Caltech is set to 30, 100, 50 and 100, respectively. Additionally, for our experiments on digit recognition, we resize images from each domain to 32$\times$32 and convert the gray-scale images to RGB.\\
\noindent \textbf{Hyper-parameters.} The entire framework is trained in an end-to-end fashion via back-propagation. Specifically, we utilize stochastic gradient descent with momentum value $0.9$ and weight decay equalling $10^{-3}$. The learning rate is set at $10^{-2}$ for the bottleneck and classifier layers, while the backbone is trained at a rate of $10^{-3}$. In addition, we use the learning rate scheduling strategy from \cite{ganin2015unsupervised}, where the initial rate is exponentially decayed as learning progresses. The batch size is set to $32$. We use $\lambda=0.3$ for all the object recognition tasks and $\lambda=0.1$ for the digits benchmark. For adaptation, maximum number of epochs is set to 15, with the pseudo-labels updated at the start of every epoch. We use PyTorch \cite{paszke2019pytorch} for all our experiments.

\subsection{Digit recognition}
The results on digit recognition are shown in Table \ref{tab:digit}. The digit benchmark is characterised by the presence of very poor sources in some scenarios, notably when treating MNIST-M, SVHN or Synthetic Digits as the target domain. For example, on SVHN as the target, the best and worst source models adapted using SHOT \cite{liang2020we} exhibit a performance gap of more than $50\%$. Combining these models via uniform ensembling results in a predictor which greatly underperforms the best adapted source. In contrast, our method restricts this severe negative transfer via a joint adaptation over the models and the ensembling weights, and outperforms the baseline by $\mathbf{24.3}\%$. The corresponding increase in performance when using Synthetic Digits and MNISTM as the target are $\mathbf{13.5}\%$ and $\mathbf{2.6}\%$ respectively. Overall, we obtain an average increase of $\mathbf{8.2}\%$ across all the digit adaptation tasks over SHOT-Ens. In spite of such disparities among the sources, our framework also achieves performance at par with the best adapted source and actually outperforms the latter on the MNIST transfer task. We also outperform the traditional multi-source adaptation methods, which use source data, on all the tasks by an average of $\mathbf{6.4}\%$.

\subsection{Object recognition}
\noindent \textbf{Office.} The results for the 3 adaptation tasks on the Office dataset are shown in Table \ref{tab:office}. We achieve performance at par with the best adapted source models on all the tasks and obtain an average increase of $\mathbf{5.2}\%$ over SHOT-Ens. In the task of adapting to the Webcam (W) domain, negative transfer from the Amazon (A) model brings the ensemble performance down - our model is able to prevent this, and not only outperforms the ensemble by $\mathbf{3.5}\%$ but also achieves higher performance than the best adapted source. 
\\
\noindent \textbf{Office-Home.} On the Office-Home dataset, we outperform all baselines as shown in Table \ref{tab:office-home}. Across all tasks, our method achieves a mean increase in accuracy of $\mathbf{2}\%$ over the respective best adapted source models. This can be attributed to the relatively small performance gap between the best and worst adapted sources in comparison to other datasets. This suggests that, as the performance gap between the best and worst performing sources gets smaller, or outlier sources are removed, our method can generalize even better to the target.
\\
\noindent \textbf{Office-Caltech.} The results follow a similar trend on the Office-Caltech dataset, as shown in Table \ref{tab:office-cal}. With a mean accuracy of $\mathbf{98}\%$ across all tasks, we outperform all baselines.

\subsection{Ablation study}
\noindent \textbf{Contribution of each loss.} Our framework is trained using a combination of three distinct losses: $\mathcal{L}_{\text{div}}$, $\mathcal{L}_{\text{ent}}$ and $\mathcal{L}_{\text{pl}}$. We study the contribution of each component of our framework to the adaptation task in Table \ref{tab:ablation}. First, we remove both the diversity loss and the pseudo-labeling, and train using only $\mathcal{L}_{\text{ent}}$. Next, we add in $\mathcal{L}_{\text{div}}$ and perform weighted information maximization. Finally, we also compare the results of solely using $\mathcal{L}_{\text{pl}}$.\\

\begin{table}[h]
\small
\centering
\begin{tabular}{@{}lllll@{}}
\toprule[1.2pt]
\textsc{Method}        & \begin{tabular}[c]{@{}l@{}}\textsc{A,D}\\ $\rightarrow$ \textsc{W}\end{tabular} & \begin{tabular}[c]{@{}l@{}}\textsc{A,W}\\ $\rightarrow$ \textsc{D}\end{tabular} & \begin{tabular}[c]{@{}l@{}}\textsc{D,W}\\ $\rightarrow$ \textsc{A}\end{tabular} & 
\textsc{Avg.} \\ \midrule[1.1pt]
$\mathcal{L}_{pl}$               &97.6 &98.5 &75.3 &90.5  \\
$-\mathcal{L}_{ent}$  &96.6 &99.0 &68.5 &88.0 \\  
$-\mathcal{L}_{ent} + \mathcal{L}_{div}$  &95.9 &99.0 &71.6 & 88.9\\ 
$-\mathcal{L}_{ent} + \mathcal{L}_{div} + \lambda\mathcal{L}_{pl}$  &\textbf{98.4} &\textbf{99.6} &\textbf{75.4} &\textbf{91.1} \\

\bottomrule \\
\end{tabular}

\caption{\textbf{Loss-wise ablation.} Contribution of each component in adaptation on the Office dataset.}
\label{tab:ablation}
\end{table}

\noindent \textbf{Analysis on the learned weights.}
Our framework jointly adapts the the source models and learns the weights on each such source. To understand the impact of the weights, we propose to freeze the feature extractors and optimize solely over the weights $\{\alpha_j\}_{i=1}^n$. Naturally, this setup yields better performance compared to trivially assigning equal weights to all source models, as shown in Table \ref{tab:weights}. More interestingly, the learned weights correctly indicate which source model performs better on the target and could serve as a proxy indicator in a model selection framework. See Figure \ref{fig:weights}. \\
\begin{table}[]
\scriptsize
\centering
\scalebox{0.9}{
\begin{tabular}{@{}llllll@{}}
\toprule[1.2pt]
\textsc{Method}
& \begin{tabular}[c]{@{}l@{}}\textsc{Ar,Cl,Pr}\\ $\rightarrow$ \textsc{Rw}\end{tabular} 
& \begin{tabular}[c]{@{}l@{}}\textsc{Ar,Cl,Rw}\\ $\rightarrow$ \textsc{Pr}\end{tabular} 
& \begin{tabular}[c]{@{}l@{}}\textsc{Ar,Pr,Rw}\\ $\rightarrow$ \textsc{Cl}\end{tabular} 
& \begin{tabular}[c]{@{}l@{}}\textsc{Cl,Pr,Rw}\\ $\rightarrow$ \textsc{Ar}\end{tabular} & \textsc{Avg.}  \\ \midrule
Source-Ens       
& 67.6 
& 51.4  
& 77.7
& 80.1     
& 69.2 \\
DECISION-weights 
& \textbf{68.8}  
& \textbf{52.3}
& \textbf{79.2}  
& \textbf{80.4}   
& \textbf{70.2} \\
\bottomrule \\
\end{tabular}
}
\caption{\textbf{Performance on freezing backbone network on Office-Home.} DECISION-weight is optimized solely over the source weights and consistently performs better than uniform weighting.}
\label{tab:weights}
\end{table}
\begin{figure}
    \centering
    \includegraphics[width=0.48\textwidth]{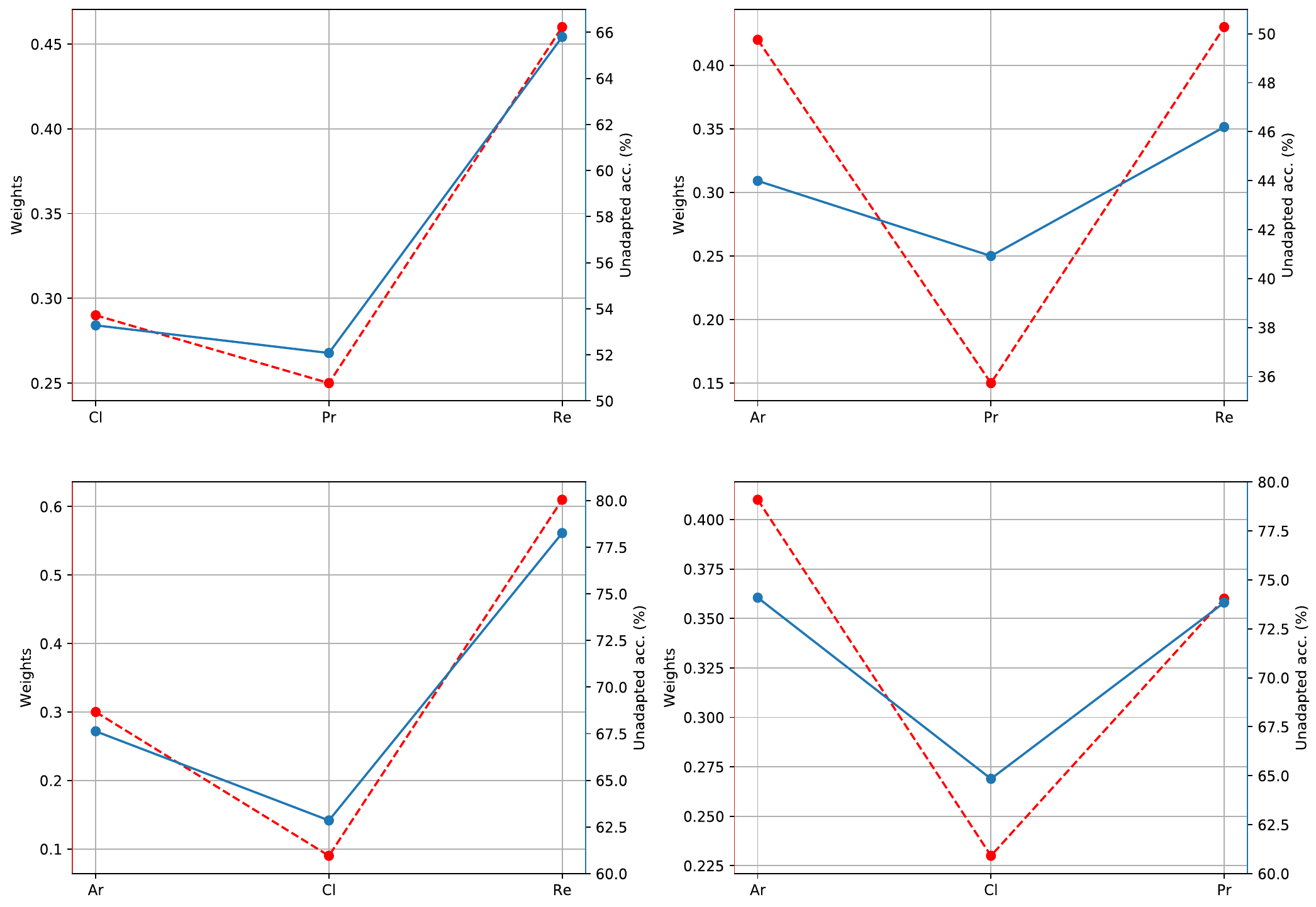}
    \caption{\textbf{Weights as model selection proxy.} The weights learnt by our framework on Office-Home correlates positively with the unadapted source model performance. (Left axis corresponds to the red plot and right to the blue plot, best viewed in color.)}
    \label{fig:weights}
    \vskip -0.1in
\end{figure}

\noindent \textbf{Distillation into a single model.} Since we are dealing with multiple source models, inference time is of the order $\mathcal{O}(m)$, where $m$ is the number of source models. If $m$ is large, this can lead to inference being quite time consuming. To ameliorate this overhead, we follow a knowledge distillation \cite{hinton2015distilling} strategy to obtain a single target model. Teacher supervision is obtained by linearly combining the adapted models via the learned weights. These annotations are subsequently used to train the single student model via vanilla cross-entropy loss. Results obtained using this strategy are presented in the supplementary.

\section{Conclusions and Future Work}
We developed a new UDA algorithm that can learn from and optimally combine multiple source models without requiring source data. We provide theoretical intuitions for our algorithm and verify its effectiveness in a variety of domain adaptation benchmarks. There are multiple exciting directions to pursue including: First, we suspect that our algorithm's performance can be further boosted by incorporating data augmentation techniques during training. Second, when there are too many source models to utilize, it would be interesting to study whether we can automatically select an optimal subset of the source models without requiring source data in an unsupervised fashion.

\noindent\textbf{Acknowledgements.}
This work was partially supported by the ONR grant N00014-19-1-2264 and the NSF grants 2008020, 2046816.

{\small
\bibliographystyle{ieee_fullname}
\bibliography{main}
}
\onecolumn
\newpage




\begin{center}
\textbf{\Large{Unsupervised Multi-source Domain Adaptation Without Access to Source Data \\
\vspace{1cm}
(Supplementary Material)}}
\end{center}

\newpage
\section{Proof of Lemma 1}
\begin{lemma}
Assume that the loss $L(\theta(x),y)$ is convex 
in its first argument and that there exists a $\lambda \in \mathbb{R}^n$ where $\lambda \geq 0$ and $\lambda^\top \mathbbm{1}=1$, such that the target distribution is exactly equal to the mixture of source distributions, i.e $Q_T=\sum_{i=1}^n \lambda_i Q^i_S$. Set the target predictor as the following convex combination of the optimal source predictors
$$\theta_T(x)=\sum_{k=1}^n \frac{\lambda_k Q^k_S(x)}{\sum_{j=1}^n \lambda_j Q^j_S(x))} \theta_{S}^k(x).$$ 
Recall the pseudo-labeling loss (\textcolor{red}{$10$}). Then, for this target predictor, over the target distribution, the unsupervised loss induced by the pseudo-labels and the supervised loss are both less than or equal to the loss induced by the best source predictor. In particular, 
\[
 \mathcal{L}(Q_T,\theta_T) \leq \underset{1\leq j\leq n}{\min}\ \mathcal{L}(Q_T,\theta_{S}^j).
\]
\end{lemma}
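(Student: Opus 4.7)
The plan is to prove the inequality directly via Jensen's inequality, exploiting the fact that the coefficients $w_k(x) := \lambda_k Q_S^k(x)/Q_T(x)$ form a probability distribution over $k$ at every point $x$ in the support of $Q_T$. Specifically, since $\lambda_k \geq 0$, $Q_S^k(x) \geq 0$, and $\sum_{k=1}^n \lambda_k Q_S^k(x) = Q_T(x)$, we have $w_k(x) \geq 0$ and $\sum_k w_k(x) = 1$, so convexity of $L$ in its first argument gives
\begin{equation*}
L(\theta_T(x), y) = L\!\left(\sum_{k=1}^n w_k(x)\, \theta_S^k(x),\, y\right) \leq \sum_{k=1}^n w_k(x)\, L(\theta_S^k(x), y).
\end{equation*}

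Next, I would integrate both sides against $Q_T(x)$. The crucial cancellation is that $w_k(x) Q_T(x) = \lambda_k Q_S^k(x)$, so the $Q_T(x)$ in the denominator disappears and the integral collapses to
\begin{equation*}
\mathcal{L}(Q_T, \theta_T) \leq \sum_{k=1}^n \lambda_k \int L(\theta_S^k(x), y)\, Q_S^k(x)\, dx = \sum_{k=1}^n \lambda_k \, \mathcal{L}(Q_S^k, \theta_S^k).
\end{equation*}

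The last step uses the optimality of each $\theta_S^k$ on its own source distribution: for every fixed index $j$, $\mathcal{L}(Q_S^k, \theta_S^k) \leq \mathcal{L}(Q_S^k, \theta_S^j)$ for all $k$. Summing with weights $\lambda_k$ and recognizing that $\sum_k \lambda_k \mathcal{L}(Q_S^k, \theta_S^j) = \mathcal{L}(Q_T, \theta_S^j)$ (by linearity of expectation and the mixture assumption on $Q_T$), I obtain $\mathcal{L}(Q_T, \theta_T) \leq \mathcal{L}(Q_T, \theta_S^j)$ for every $j$, hence $\mathcal{L}(Q_T, \theta_T) \leq \min_j \mathcal{L}(Q_T, \theta_S^j)$. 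For the strict inequality under the stated condition, I would track the optimality step with $j = \alpha$: if $\lambda_i > 0$ for all $i$ and $\mathcal{L}(Q_S^i, \theta_S^i) < \mathcal{L}(Q_S^i, \theta_S^\alpha)$ for some $i$, then the $i$-th term in $\sum_k \lambda_k \mathcal{L}(Q_S^k, \theta_S^k) \leq \sum_k \lambda_k \mathcal{L}(Q_S^k, \theta_S^\alpha)$ is strict, yielding the strict conclusion.

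I do not foresee a major obstacle here — the argument is essentially a two-line application of Jensen's inequality followed by invoking source-wise optimality. The only mildly subtle point is bookkeeping the measure-zero set where $Q_T(x) = 0$ (there $\theta_T$ is undefined, but this set contributes nothing to $\mathcal{L}(Q_T, \theta_T)$, so it can be safely ignored). Also worth remarking: the claim is stated for the pseudo-labeling loss (\ref{Im_pl}) as well, which is simply the supervised loss with $y$ replaced by the pseudo-label $\hat{y}_T$; since the above derivation treats $y$ as an arbitrary label, the identical argument goes through verbatim for pseudo-labels.
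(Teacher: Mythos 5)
Your proposal is correct and follows essentially the same route as the paper's own proof: Jensen's inequality applied pointwise to the convex combination, cancellation of $Q_T(x)$ via the mixture assumption to get $\mathcal{L}(Q_T,\theta_T)\leq\sum_k\lambda_k\mathcal{L}(Q_S^k,\theta_S^k)$, then source-wise optimality and the linearity identity $\mathcal{L}(Q_T,\theta_S^j)=\sum_k\lambda_k\mathcal{L}(Q_S^k,\theta_S^j)$, with the strict-inequality case tracked identically. Your added remarks on the $Q_T(x)=0$ set and on the pseudo-label loss are fine but do not change the argument.
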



\begin{proof}
We can see that the left hand-side of the inequality can be upper-bounded by some loss as follows,

\begin{equation}
\begin{split}
\mathcal{L}(Q_T,\theta_T)= \int_x Q_T(x) L(\theta_T(x),y)=& \int_x Q_T(x) L\Bigg(\sum_{i=1}^n \frac{\lambda_i Q^i_S(x)}{\sum_{j=1}^n \lambda_j Q^j_S(x))} \theta_{S}^i(x),y\Bigg)dx\\
& \leq \int_x Q_T(x) \sum_{i=1}^n \frac{\lambda_i Q^i_S(x)}{\sum_{j=1}^n \lambda_j Q^j_S(x))} L(\theta_{S}^i(x),y) dx~~\ \text{(from Jensen's inequality)} \\
& = \int_x Q_T(x) \sum_{i=1}^n\frac{\lambda_i Q^i_S(x)}{Q_T(x)} L(\theta_{S}^i(x),y)dx ~~\ \text{(from distribution assumption)} \\
& = \sum_{i=1}^n \lambda_i \int_x Q^i_S(x)L(\theta_{S}^i(x),y) dx~~\ \text{(changing the order of summation)} \\
& = \sum_i \lambda_i \mathcal{L}(Q^i_S(x),\theta_{S}^i)
\end{split}
\end{equation}

Now for the R.H.S.~we can write this loss as follows,
\begin{equation}
\begin{split}
    \mathcal{L}(Q_T,\theta_{S}^j) &= \int_{x}  Q_T(x) L(\theta_{S}^j(x),y)dx \\
    &= \int_{x} \sum_{i=1}^n \lambda_i Q^i_S(x)  L(\theta_{S}^j(x),y)dx \\
    &= \sum_{i=1}^n \lambda_i \int_{x} Q^i_S L(\theta_{S}^j(x),y)dx \\
    &= \sum_{i=1}^n \lambda_i  \mathcal{L}(Q^i_S(x),\theta_{S}^j)
\end{split}
\end{equation}
Now recall from main paper that,
\[
\theta^k_S = \arg\min_{\theta} {\cal{L}}(Q^k_S, \theta)\quad \text{for}\quad 1\leq k\leq n.
\].

This means $\theta_{S}^i$ is the best predictor for the source $i$, which has distribution $Q^i_S$. Thus we find that $\mathcal{L}(Q^i_S,\theta_{S}^i) \leq \mathcal{L}(Q^i_S,\theta_{S}^j) \ \forall j$, which implies $\sum_i \lambda_i \mathcal{L}(Q^i_S,\theta_{S}^i) \leq \sum_i  \lambda_i \mathcal{L}(Q^i_S,\theta_{S}^j)$. This further implies that $\mathcal{L}(Q_T,\theta_T) \leq  \mathcal{L}(Q_T,\theta_{S}^j) \ \forall j$, which in turn concludes the proof $\mathcal{L}(Q_T,\theta_T) \leq \underset{1\leq j\leq n}{\text{min}}\ \mathcal{L}(Q_T,\theta_{S}^j)$. Finally, suppose the entries of $\lambda$ are strictly positive and let $\beta=\arg\min_j \mathcal{L}(Q_T,\theta^j_S)$. Observe that, if there is a source $i$ such that the strict inequality $\mathcal{L}(Q^i_S,\theta_{S}^i)< \mathcal{L}(Q^i_S,\theta_{S}^{\beta})$ holds, then the main claim of the lemma also becomes strict as we find 
\[
\mathcal{L}(Q_T,\theta_T)\leq \sum_i \lambda_i \mathcal{L}(Q^i_S,\theta_{S}^i) < \sum_i  \lambda_i \mathcal{L}(Q^i_S,\theta_{S}^{\beta})\leq \min_j \mathcal{L}(Q_T,\theta^j_S).
\]
Verbally, this strict inequality has a natural meaning that the model $j$ is strictly worse than model $i$ for the source data $i$.
\end{proof}

\section{Detailed steps of combination rule under source distribution uniformity assumption}
See the discussion after \textbf{Lemma 1} in the main paper for reference. 
\begin{equation}
\label{uniform}
\begin{split}
    \theta_T(x)&=\sum_{k=1}^n \frac{\lambda_k Q^k_S(x)}{\sum_{j=1}^n \lambda_j Q^j_S(x)} \theta_{S}^k(x)\\
    &=\sum_{k=1}^n \frac{\lambda_k c_k \mathcal{U}(x)}{\sum_{j=1}^n \lambda_j c_j \mathcal{U}(x)} \theta_{S}^k(x) \\
    &= \sum_{k=1}^n \frac{\lambda_k c_k}{\sum_{j=1}^n \lambda_j c_j} \theta_{S}^k(x) \\
\end{split}
\end{equation}

\section{Additional Experiments}

\begin{figure}[ht]
    \centering
    \includegraphics[width=\textwidth]{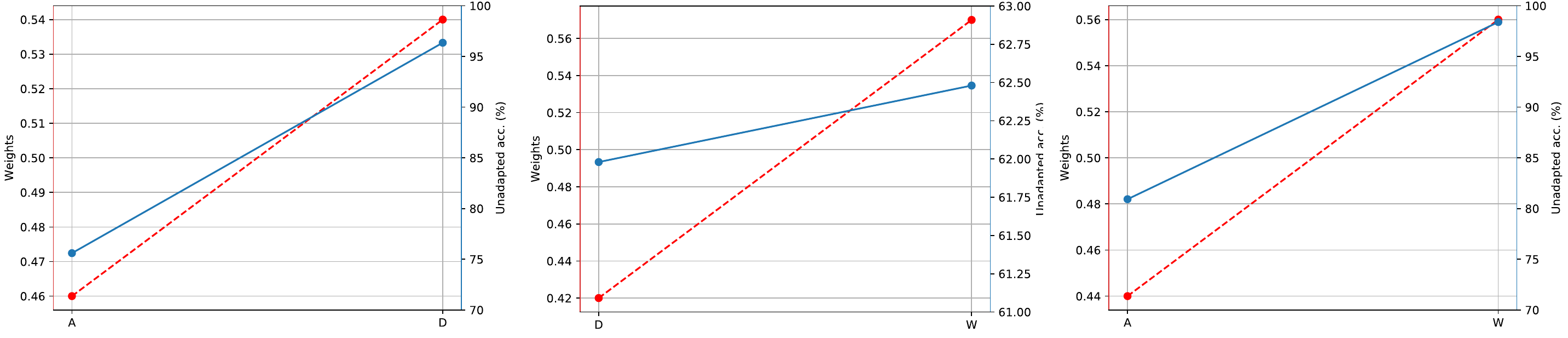}
    \caption{\textbf{Weights as model selection proxy.} The weights learnt by our framework on Office-31 correlates positively with the unadapted source model performance. (Left axis corresponds to the red plot and right to the blue plot, best viewed in color.) }
    \label{fig:weights_office31}
\end{figure}
From Figure~\ref{fig:weights_office31}, we can clearly see that for the model which gives higher accuracy for the unadapted scenario, it is automatically given higher weightage by our algorithm. As a result, we can easily infer about the quality of the source domain, in relation to the target, from the weights learnt by our framework. \\

\noindent \textbf{Effect of weight on pseudo-labeling.} We investigate the effect of the weight $\lambda$ on $\mathcal{L}_{\text{pl}}$. We perform experiments on the Office dataset by varying the value of $\lambda$ and plot the results in Figure \ref{fig:hyperparam}. As shown in the plot, the proposed method performs best at $\lambda=0.3$

\begin{figure}[H]
    \centering
    \includegraphics[width=0.46\textwidth]{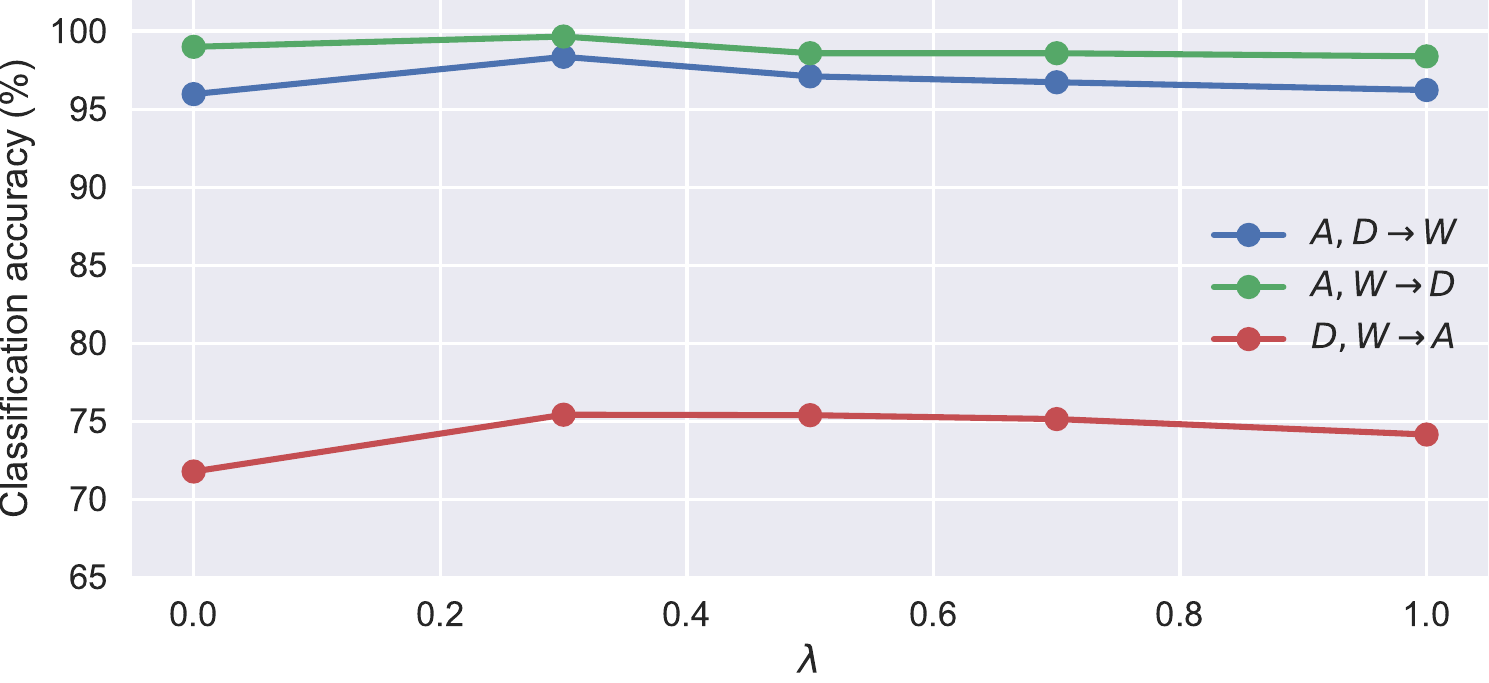}
    \caption{\textbf{Effect of $\lambda$.} The variations in classification as the weight on $\mathcal{L}_{\text{pl}}$ is varied. (Best viewed in color)}
    \label{fig:hyperparam}
\end{figure}

\noindent\textbf{Effect of outlier source models.} Our method is clearly robust to outlier source models. In Table \textcolor{red}{2} of the main paper, when \textit{MNIST-M} is the target, transferring from only \textit{USPS}, leads to an extremely poor performance of \textbf{21.3$\%$} - here, USPS is a strong outlier. Despite the presence of such a poor source, our framework is mostly able to correctly negate the transfer from USPS, achieving a performance of \textbf{93$\%$}, close to the best source performance of \textbf{94$\%$}. On removing \textit{USPS} as a source, DECISION outperforms the best source by achieving an accuracy of \textbf{94.5$\%$}. In the future, we plan to actively use the weights to simultaneously remove poor sources while adaptation in order to boost the performance.\\

\noindent\textbf{DomainNet [\textcolor{green}{32}]:}
 This is a relatively new and large dataset where there are six domains under the common object categories, namely quickdraw (Q), clipart (C), painting (P), infograph (I), sketch (S) and real (R) with a total of 345 object classes in each domain. Experimental results on this dataset are shown in Table~\ref{tab:domain-net}. Our method consistently outperforms the best adapted source baselines (SHOT-best) except for \textit{infograph} as a target. However the average performance over all the domains as target is slightly less than the SHOT-Ens. Note that for \textit{quickdraw}  and \textit{clipart} as target, our method outperforms all the state of the art methods including source free and with source data single and multi source state-of-the-art DA methods. \\
\begin{table*}[t]
\small
\centering
\begin{tabular}{@{}lllllllll@{}}
\toprule[1.2pt]
\textsc{Source}                       & \textsc{Method}        & \begin{tabular}[c]{@{}l@{}}\textsc{C,P,I,S,R}\\ $\rightarrow$ \textsc{Q}\end{tabular} & \begin{tabular}[c]{@{}l@{}}\textsc{Q,P,I,S,R}\\ $\rightarrow$ \textsc{C}\end{tabular} & \begin{tabular}[c]{@{}l@{}}\textsc{Q,C,I,S,R}\\ $\rightarrow$ \textsc{P}\end{tabular} & \begin{tabular}[c]{@{}l@{}}\textsc{Q,C,P,S,R}\\ $\rightarrow$ \textsc{I}\end{tabular} &
\begin{tabular}[c]{@{}l@{}}\textsc{Q,C,P,I,R}\\ $\rightarrow$ \textsc{S}\end{tabular} &
\begin{tabular}[c]{@{}l@{}}\textsc{Q,C,P,I,S}\\ $\rightarrow$ \textsc{R}\end{tabular}

& \textsc{Avg.} \\ \midrule[1.1pt]
\multirow{5}{*}{Multiple(w)} 
                             & DAN[\textcolor{green}{25}]   &16.2 &39.1 &33.3 &11.4 &29.7 &42.1 &28.6 \\
                             & DCTN[\textcolor{green}{46}] &7.2 &48.6 &48.8 &23.4 &47.3 &53.5 &38.1 \\
                             & MCD[\textcolor{green}{37}]  &7.6 &54.3 &45.7 &22.1 &43.5 &58.4 &38.6 \\
                             &$\text{M}^3$SDA-$\beta$[\textcolor{green}{32}]  &6.3 &58.6 &52.3 &26 &49.5  &62.7 &42.5\\
                             
\midrule
\multirow{4}{*}{Single(w/o)} & Source-best   &11.9 &49.9 &47.5 &20 &41.1 &57.7 &38\\
                        & Source-worst  &2.3 &12.2 &2.2 &1.1 &8.7 &4.8 &5.2\\  
                        & SHOT[\textcolor{green}{22}]-best  &18.7 &58.3 &53 &22.7 &48.4 &65.9 &44.5\\ 
                        & SHOT[\textcolor{green}{22}]-worst &3.8 &14.8 &3.5 &1 &11.9 &6.6 &7\\
\midrule
\multirow{2}{*}{Multiple(w/o)} & SHOT[\textcolor{green}{22}]-Ens &15.3 &58.6 &\textbf{55.3} &\textbf{25.2} &\textbf{52.4} &\textbf{70.5} &\textbf{46.2}\\
                          & DECISION(Ours)     &\textbf{18.9} &\textbf{61.5} &54.6 &21.6  &51 &67.5 &45.9\\
\bottomrule
\end{tabular}
\vspace{0.5cm}
\caption{\textbf{Results on DomainNet}:Q,C,P,I,S and R are abbreviations of \textit{quickdraw}, \textit{clipart}, \textit{painting}, \textit{infograph}, \textit{sketch} and \textit{real}.}

\label{tab:domain-net}
\end{table*}

\noindent \textbf{Distillation.} Our results on using the distillation strategy outlined in Section 5.4 of the main paper are shown in Table \ref{tab:distill}. Despite the model compression, the performance remains consistent.

\begin{table}[H]
\centering
\begin{tabular}{@{}lllll|llll|lll@{}}
\toprule
\textsc{Method} & \multicolumn{4}{c}{\textsc{Office-home}} & \multicolumn{4}{c}{\textsc{Office-Caltech}} & \multicolumn{3}{c}{\textsc{Office}} \\ \midrule
 & \textsc{Rw}   & \textsc{Pr}   & \textsc{Cl}   & \textsc{Ar}   & \textsc{A}       & \textsc{C}      & \textsc{D}      & \textsc{W}      & \textsc{A}       & \textsc{D}       & \textsc{W}      \\ \cmidrule(l){2-12} 
DECISION (original)  & 83.6 & 84.4 & 59.4 & 74.5 & 95.9    & 95.9   & 100    & 99.6   & 75.4    & 99.6    & 98.4   \\
DECISION (distillation)  & 83.7 & 84.4 & 59.1 & 74.4 & 96.0    & 95.7   & 99.4   & 99.6   & 75.4    & 99.6    & 98.1   \\ \bottomrule
\end{tabular}
\vspace{0.5cm}
\caption{\textbf{Distillation results on object recognition tasks.} Performance remains consistent across all datasets despite distilling into a single target model.}
\label{tab:distill}
\end{table}


\end{document}